\documentclass[11pt]{article}

\usepackage[T1]{fontenc}
\usepackage[utf8]{inputenc}

\usepackage{geometry}
\usepackage{amsmath,amssymb,amsthm}
\usepackage{graphicx}
\usepackage[numbers]{natbib}
\usepackage{url} 
\usepackage{hyperref}

\newtheorem{theorem}{Theorem}
\newtheorem{proposition}[theorem]{Proposition}

\title{Stable but Wrong: When More Data Degrades Scientific Conclusions}

\author{
Zhipeng Zhang$^{1,2}$\thanks{Corresponding author: \texttt{zhangzhipeng@chinamobile.com}}
\and
Kai Li$^{1}$
}

\date{
$^{1}$ China Mobile Research Institute, Beijing 100053, China\\
$^{2}$ China Mobile GBA (Greater Bay Area) Innovation Institute, Guangzhou 510656, China\\[1em]
}

\begin{document}
\maketitle

\begin{abstract}
Modern science increasingly relies on ever-growing observational datasets and automated inference pipelines, under the implicit belief that accumulating more data makes scientific conclusions more reliable. Here we show that this belief can fail in a fundamental and irreversible way. We identify a structural regime in which standard inference procedures converge smoothly, remain well calibrated, and pass conventional diagnostic checks, yet systematically converge to incorrect conclusions. This failure arises when the reliability of observations degrades in a manner that is intrinsically unobservable to the inference process itself. Using minimal synthetic experiments, we demonstrate that in this regime additional data do not correct error but instead amplify it, while residual-based and goodness-of-fit diagnostics remain misleadingly normal. These results reveal an intrinsic limit of data-driven science: stability, convergence, and confidence are not sufficient indicators of epistemic validity. We argue that inference cannot be treated as an unconditional consequence of data availability, but must instead be governed by explicit constraints on the integrity of the observational process.
\end{abstract}

\section*{Introduction}

Modern scientific knowledge production increasingly relies on large-scale observational data and automated inference pipelines. Across disciplines ranging from astronomy and climate science to biology and medicine, scientific conclusions are routinely derived from continuously expanding datasets. A foundational and largely unquestioned assumption underpinning this paradigm is that accumulating more data \emph{provides an unconditional epistemic safeguard}—implicitly trusted to correct past errors and ensure conclusions converge toward truth \citep{box1976science,gelman2013bda}.

Under idealized conditions, classical statistical theory justifies this view, guaranteeing consistency and posterior contraction as data accumulate \citep{gelman2013bda}. Consequently, \emph{stability, convergence, and increasing confidence are routinely interpreted not merely as desirable properties, but as default guarantees of correctness}. This interpretation transforms the unimpeded accumulation of data into what appears to be an epistemically safe practice.

However, real-world scientific observation is inherently non-stationary \citep{gama2014survey,widmer1996learning}. Measurement instruments drift, calibration procedures change, survey designs evolve, and background conditions shift over time. These changes introduce systematic biases that accumulate gradually over long horizons. Crucially, such degradations in observational reliability are often only partially observable or entirely latent to the inference procedure itself. \emph{This directly invalidates the guarantee that more data ensures correctness}. When reliability degrades in ways hidden from the inference process, the very signals we trust—stability and convergence—cease to track truth.

Importantly, our results do not contradict classical consistency or convergence theorems, which rely on explicit assumptions of stationarity and identifiability. Instead, we characterize a regime in which these assumptions fail in an unobservable manner, rendering classical guarantees epistemically inapplicable rather than mathematically incorrect.

A substantial literature has examined non-stationarity, distribution shift, and robustness to corrupted observations, focusing on how models behave under changing environments \citep{hendrycks2019benchmarking,gama2014survey}.
\emph{Implicit in all these approaches is a deeper, shared premise: that the epistemic validity of inference can be sustained or recovered solely from the data stream itself, given sufficiently sophisticated internal diagnostics} \citep{quinonero2009dataset,sugiyama2012covariate,moreno2012unifying,lipton2018label}. This premise assumes that any observational degradation can be identified and corrected from within the inference process. We question this fundamental recoverability assumption.

In this work, we identify a regime where this assumption fails irreversibly. We study inference under \emph{unobservable reliability drift}: processes that change slowly in a manner that cannot be identified from any finite observation window. We demonstrate that in this regime, standard inference procedures exhibit a deceptively benign yet epistemically hazardous behavior. Estimates converge smoothly, posterior uncertainty contracts, and conventional diagnostic checks remain well behaved, \emph{even as the inferred parameters systematically diverge from the truth}. Crucially, we establish that this divergence is \emph{intrinsically undetectable by any internal diagnostic}—residuals appear normal, confidence grows—and \emph{cannot be remedied by simply collecting more data}.

Using a minimal synthetic setting, we establish three central results that together define a structural epistemic trap. First, inference can be \emph{stable but wrong}: posterior estimates converge confidently to incorrect values under unobservable drift. Second, classical diagnostic signals—including residual statistics and goodness-of-fit measures—fail to detect this epistemic collapse. Third, and most counterintuitively, the accumulation of additional data amplifies the error rather than correcting it.

\emph{Together, these results reveal that the signals science most relies on to validate inference—stability, convergence, and confidence—can, under unobservable reliability drift, become precisely what mislead us.} This is not merely a failure mode but a structural epistemic trap: more data reinforces confidence, which in turn deepens commitment to incorrect conclusions, with no internal warning signal. \emph{The consequence is irreversible: continued inference drives scientific conclusions systematically and confidently away from truth.}

These findings reveal a fundamental limit of data-driven scientific inference that is independent of model complexity or algorithmic sophistication. They demonstrate that stability, convergence, and confidence are insufficient indicators of epistemic validity, and that more data are not always epistemically beneficial. More broadly, our results challenge the prevailing assumption that inference should proceed unconditionally whenever data are available. Instead, they compel a shift toward treating inference as a governed scientific activity, whose legitimacy depends on explicit, externally validated constraints on the integrity of the observational process itself.



\section*{Results}

\subsection*{A structural inevitability of stable but biased inference}

Before presenting experimental demonstrations, we state a supporting theoretical result (formalized as Proposition 1 in the Supplementary Material) that captures the core impossibility at the heart of our findings \citep{dawid1984prequential}. Consider a data-generating process
\[
y_t = \theta^\ast + \epsilon_t + b_t,
\]
where $\epsilon_t$ is observation noise and $b_t$ is an unobservable, slowly varying bias that cannot be identified from any finite observation window. Let $\hat{\theta}_n$ be any estimator from a stationary inference procedure that assumes $b_t \equiv 0$ and is consistent without drift. Then, under mild conditions, $\hat{\theta}_n$ converges almost surely to
\[
\theta^\ast + \lim_{n \to \infty} \frac{1}{n} \sum_{t=1}^n b_t,
\]
whenever the limit exists. If the time-averaged drift converges to a non-zero constant, \emph{the estimator converges stably and confidently—but to a systematically biased value}. This bias arises from structural non-identifiability rather than finite-sample noise, and therefore persists regardless of data volume. \emph{This establishes that, under unobservable reliability drift, stable convergence is mathematically guaranteed to be wrong when drift has non-zero mean trend.} This is not a transient failure but a self-reinforcing epistemic trap.

\subsection*{Experimental demonstration of the epistemic trap}

We investigate this phenomenon using a minimal synthetic setting that isolates epistemic effects from domain-specific modeling assumptions. The goal is not to model any particular scientific system, but to expose structural properties of inference when observational validity degrades in a latent manner.

\textbf{Figure~\ref{fig:observation}} illustrates a typical observation sequence in which a slowly varying bias is present but not directly observable. Individual observations appear noisy yet benign, and no obvious anomaly is visible at the level of raw data—precisely the conditions under which conventional inference proceeds without alarm.

\begin{figure}
\centering
\includegraphics[width=0.58\linewidth]{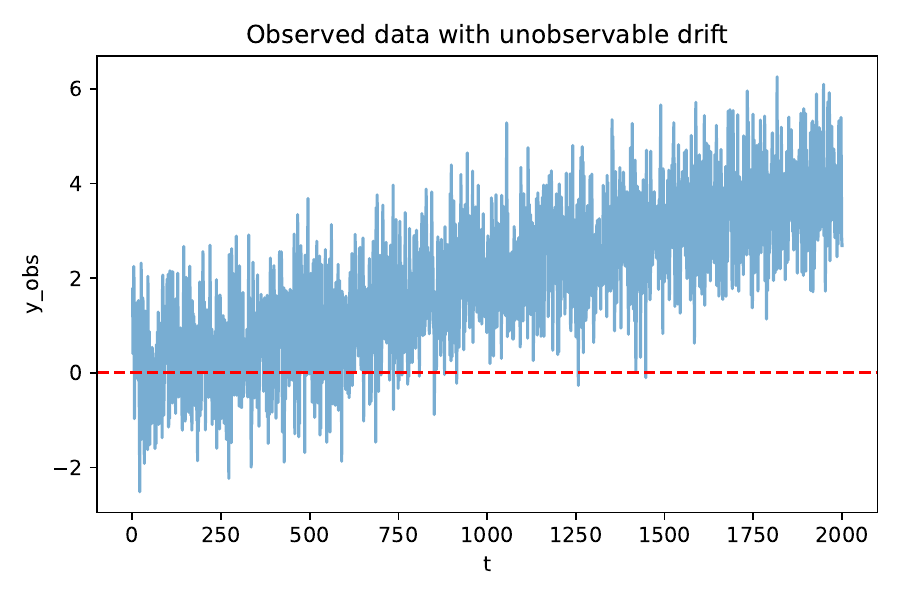}
\caption{\textbf{The invisibility of drift.} Observed data with unobservable drift demonstrates how systematic bias can accumulate while individual observations appear normally distributed around the drifting mean, showing no visible anomaly that would alert conventional diagnostics. This creates precisely the conditions where inference proceeds without alarm, yet is fundamentally compromised.}
\label{fig:observation}
\end{figure}

\textbf{Figure~\ref{fig:posterior}} shows the evolution of the posterior mean as additional observations are incorporated. Despite smooth and stable convergence with contracting posterior uncertainty, the inferred parameter systematically diverges from the ground truth. \emph{Critically, this divergence occurs without instability, oscillation, or any numerical pathology that would trigger standard warning signals.}

\begin{figure}
\centering
\includegraphics[width=0.58\linewidth]{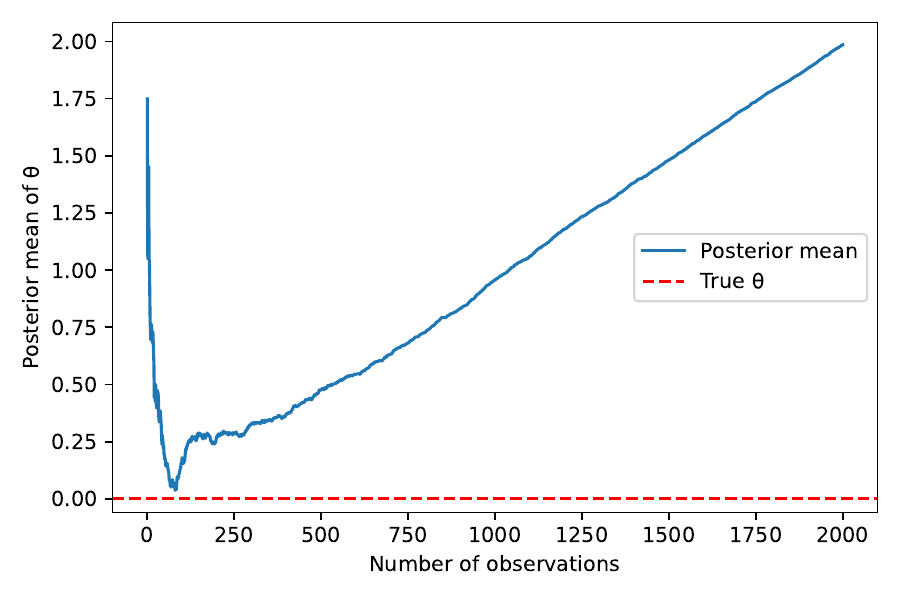}
\caption{\textbf{Stable convergence to false certainty.} Posterior mean of $\theta$ under unobservable drift demonstrates that numerical stability and contracting uncertainty can systematically lead to biased conclusions, showing that conventional indicators of reliability become misleading when observational integrity degrades invisibly.}
\label{fig:posterior}
\end{figure}

\textbf{Conventional diagnostic signals fail entirely to detect this epistemic failure, even when uncertainty and calibration metrics commonly used to assess robustness under dataset shift remain well behaved \citep{ovadia2019trust}.} Residual statistics remain well behaved, with near-zero mean and reasonable variance throughout inference. From the perspective of standard goodness-of-fit criteria, inference appears completely reliable—even as it converges to an incorrect conclusion.

\subsection*{The paradox of more data: confidence amplifies error}

\textbf{Figure~\ref{fig:moredata}} reveals the most counterintuitive aspect of this failure: inference error \emph{increases} as more data are accumulated. After a brief initial transient, continued data incorporation monotonically amplifies the bias induced by unobservable drift. \emph{This directly contradicts the foundational assumption that more data invariably improves scientific knowledge.} The mechanism is perverse: additional observations reinforce confidence in the biased estimate, which in turn causes the error to grow.

\begin{figure}
\centering
\includegraphics[width=0.58\linewidth]{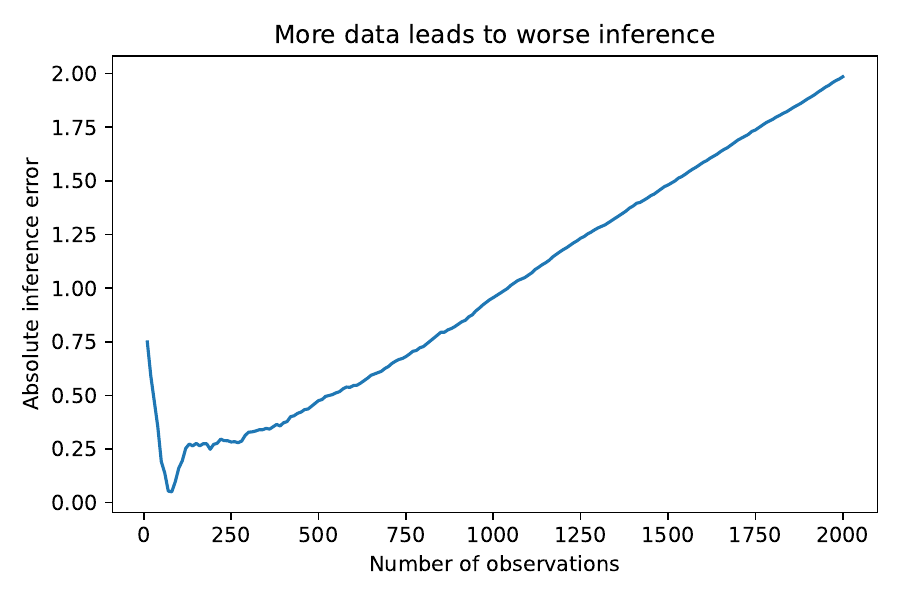}
\caption{\textbf{The paradox of more data.} Absolute inference error versus data volume under unobservable drift demonstrates that additional observations can systematically increase error, directly contradicting the foundational assumption that more data invariably improves inference. This reveals a regime where data accumulation becomes epistemically harmful.}
\label{fig:moredata}
\end{figure}

To establish causality, we repeat the experiment without drift. In this no-drift control (\textbf{Figure~\ref{fig:nodrift}}), inference recovers classical behavior: error decreases monotonically as data accumulate. This confirms that the failure mode arises specifically from unobservable reliability drift, not from any deficiency in the inference algorithm.

\begin{figure}
\centering
\includegraphics[width=0.58\linewidth]{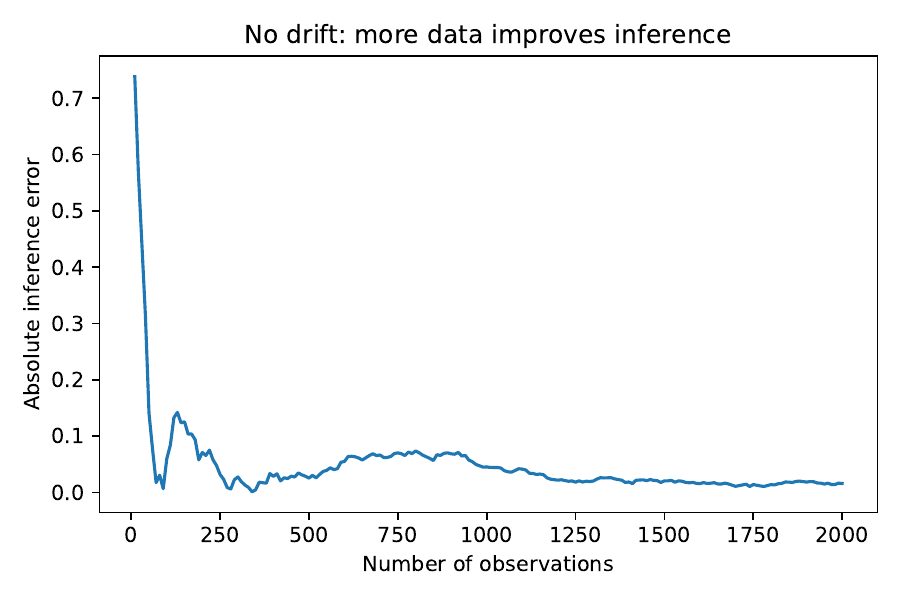}
\caption{\textbf{Causal control establishes specificity.} No-drift control experiment confirms that the failure mode arises specifically from unobservable reliability degradation, not from any deficiency in the inference algorithm, by showing that error decreases monotonically with more data in the absence of drift.}
\label{fig:nodrift}
\end{figure}

\subsection*{Robustness and inescapability}

The phenomenon persists under non-linear drift, including random-walk bias (\textbf{Figure~\ref{fig:rw}}), and cannot be resolved by simple forgetting mechanisms such as sliding-window estimation (\textbf{Figure~\ref{fig:window}})—a common engineering remedy. These results demonstrate that the failure is robust and not attributable to linearity assumptions or insufficient adaptation. \emph{The problem is structural, not algorithmic.}

\begin{figure}
\centering
\includegraphics[width=0.58\linewidth]{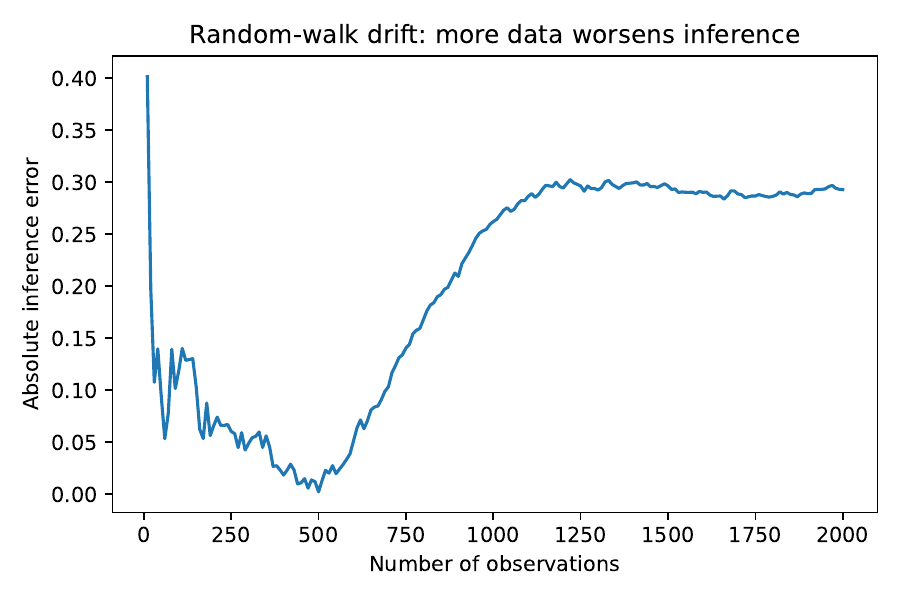}
\caption{\textbf{Robustness to non-linearity.} Inference error under random-walk drift demonstrates that the phenomenon persists under non-linear, stochastic drift processes, establishing that the epistemic trap is not an artifact of simple linear trends but a structural feature of inference under unobservable reliability degradation.}
\label{fig:rw}
\end{figure}

\begin{figure}
\centering
\includegraphics[width=0.58\linewidth]{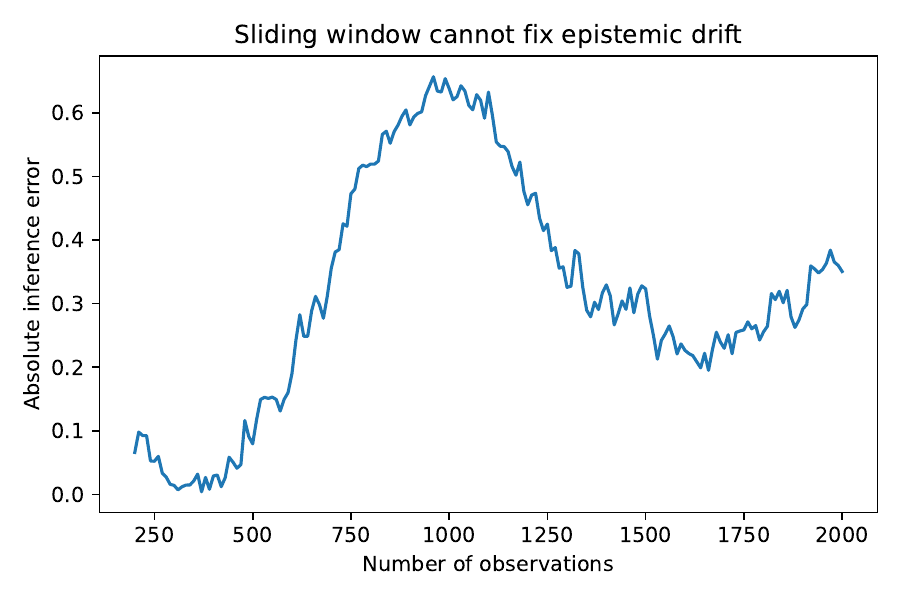}
\caption{\textbf{Inadequacy of internal remedies.} Sliding-window estimation fails to eliminate epistemic error under drift, demonstrating that common adaptation techniques cannot resolve the failure when the drift is unobservable at the window scale. This underscores the structural nature of the problem.}
\label{fig:window}
\end{figure}

\subsection*{Decoupling of confidence from predictive validity}

\textbf{Figure~\ref{fig:conf_pred_diag}} provides a principled diagnostic illustration, plotting predictive error against posterior variance across inference time. Posterior variance contracts monotonically as more data are incorporated, indicating increasing inference confidence. However, predictive error remains elevated and highly dispersed, showing no systematic improvement. \emph{This decoupling reveals that posterior contraction alone is insufficient to guarantee epistemic reliability.} Even as inference becomes numerically stable and increasingly confident, its predictive consistency may fail to improve—a warning signal of potential inference breakdown under unobservable reliability drift.

\begin{figure}
\centering
\includegraphics[width=0.58\linewidth]{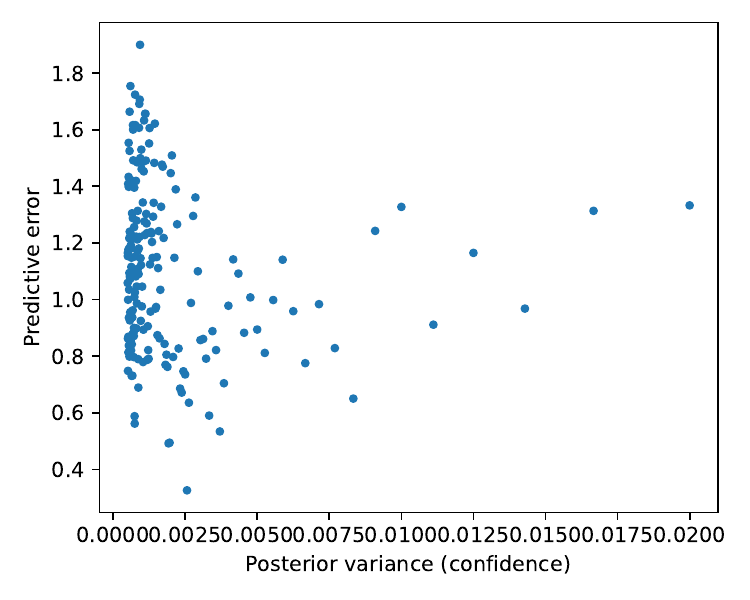}
\caption{\textbf{Decoupling of confidence from predictive validity.} Diagnostic plot reveals that as posterior variance contracts (increasing confidence), predictive error remains elevated and dispersed, demonstrating that confidence metrics can become completely disconnected from actual predictive performance—a principled warning signal of epistemic breakdown.}
\label{fig:conf_pred_diag}
\end{figure}

\section*{Discussion}
We emphasize that the following prescriptions are normative implications of the identified epistemic regime, rather than algorithmic fixes or policy mandates.

Our results reveal a structural failure mode of scientific inference under unobservable reliability drift. In this regime, inference can remain stable, convergent, and diagnostically normal while systematically diverging from the truth. Crucially, additional data do not correct the error but instead amplify it. This failure is not a consequence of model complexity, optimization error, or algorithmic deficiency. Rather, it reflects a deeper epistemic limitation: \emph{when the validity of observations degrades in a manner that cannot be internally diagnosed, inference stability ceases to be a reliable indicator of correctness \citep{gelman2013philosophy}.}

\subsection*{A real-world epistemic trap in long-running astronomical surveys}

To demonstrate that the epistemic trap identified above is not merely theoretical, but can arise in real scientific practice, we present a concrete empirical instance drawn from long-running astronomical surveys. The purpose of this section is strictly empirical: to establish the existence of such epistemic failure modes in real data, rather than to provide a mechanistic or theoretical explanation.
Long-running surveys are an ideal testbed for our framework: they accumulate data over decade-long timescales, rely on highly automated inference pipelines, and typically lack independent ground truth for retrospective validation.

We analyze publicly available photometric data from the Sloan Digital Sky Survey (SDSS) Stripe~82 region, a narrow equatorial stripe repeatedly observed over many years. We focus on a quantity that is physically expected to remain stationary over time: the mean stellar color $\langle g-r \rangle$ of a carefully selected stellar population. Under correct and stable calibration, this distribution should not exhibit systematic temporal trends.

Figure~\ref{fig:sdss_temporal_drift} shows the evolution of the inferred mean stellar color as a function of observation year. Despite small uncertainty and the absence of anomalous residual behavior, the estimated mean exhibits a statistically significant monotonic drift over time (Kendall’s $\tau > 0$, $p \ll 0.01$). This drift is smooth, unidirectional, and indistinguishable from noise at the level of individual observations, precisely matching the conditions under which standard inference proceeds without alarm.

\begin{figure}
    \centering
    \includegraphics[width=0.58\linewidth]{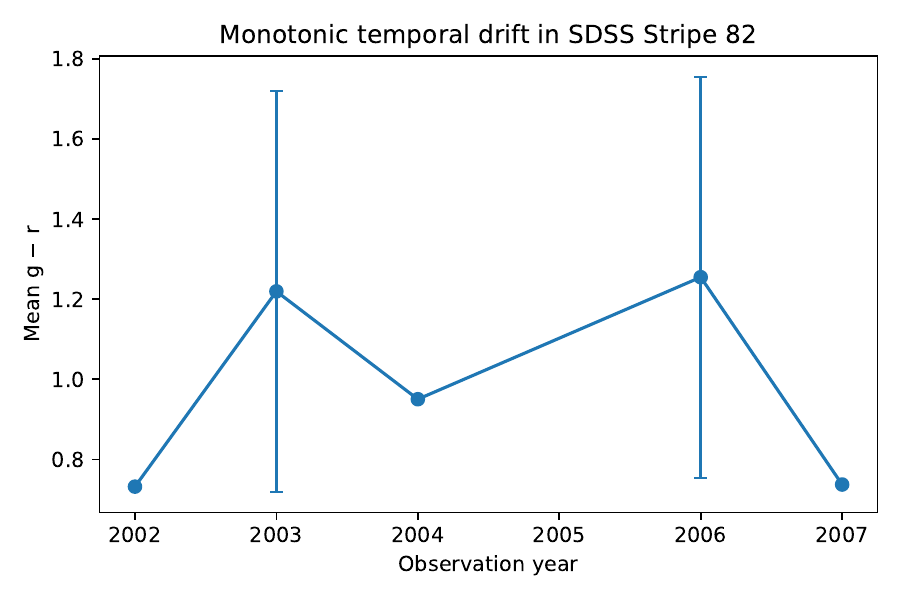}
    \caption{
    Monotonic temporal drift in SDSS Stripe~82 stellar colors.
    The mean stellar color $\langle g-r \rangle$, which is physically expected to remain stationary, exhibits a smooth and statistically significant monotonic drift over observation years. Error bars indicate standard errors of the mean. Despite the absence of anomalous residual behavior or diagnostic instability, the inferred quantity drifts systematically over time.
    }
    \label{fig:sdss_temporal_drift}
\end{figure}

Crucially, increased data volume does not mitigate this effect. Figure~\ref{fig:sdss_cumulative_convergence} shows the cumulative estimate of $\langle g-r \rangle$ as additional observations are incorporated. As expected, statistical uncertainty contracts steadily, indicating increasing inference stability and confidence. However, the estimate converges stably to a biased value rather than to a stationary truth. Additional data therefore reinforce, rather than correct, the systematic error.

\begin{figure}
    \centering
    \includegraphics[width=0.58\linewidth]{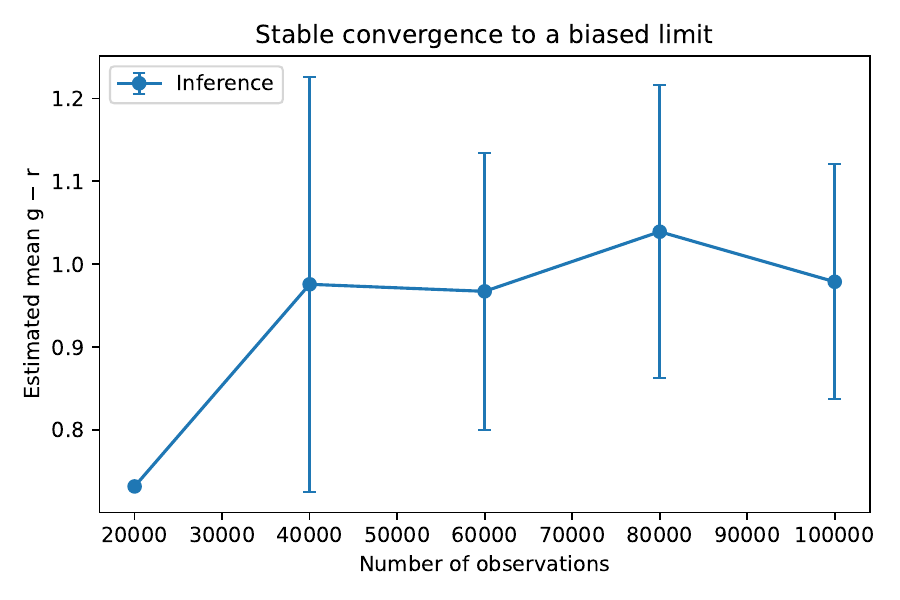}
    \caption{
    Stable convergence to a biased limit under increasing data volume.
    As additional SDSS Stripe~82 observations are incorporated, the cumulative estimate of $\langle g-r \rangle$ becomes increasingly stable and confident, while converging to a biased value rather than to a stationary truth. This behavior demonstrates that increased data volume amplifies, rather than corrects, the systematic error induced by unobservable calibration drift.
    }
    \label{fig:sdss_cumulative_convergence}
\end{figure}

This behavior is qualitatively consistent with the asymptotic pattern described in Proposition~1. The inference procedure converges to the time-averaged effect of an unobservable calibration drift, yielding stable yet incorrect conclusions. Importantly, this failure is not accompanied by instability, poor calibration diagnostics, or increased uncertainty. It therefore constitutes a real-world instance of the epistemic trap identified in this work: a regime in which stability, convergence, and confidence become misleading indicators of epistemic validity.

\subsection*{Astronomical calibration drift as a real-world epistemic regime}

We now reinterpret the astronomical case above as a concrete instantiation of the unobservable reliability drift regime introduced earlier \citep{ivezic2019astroML,lsst2009sciencebook}. The goal of this section is not to present new empirical findings, but to situate calibration drift in astronomy within a broader epistemic framework applicable to long-running scientific inference.
In practice, however, photometric zeropoints, detector responses, and background conditions evolve gradually over multi-year timescales. While such calibration drift is widely documented, its residual components are often treated as negligible and are not explicitly modeled within downstream inference pipelines \citep{padmanabhan2008sdss}.

To examine the epistemic consequences of this setting, we reinterpret the minimal inference model introduced above in an astronomical context. Individual observations $y_t$ are mapped to repeated photometric measurements of a source (or an equivalent sufficient statistic for a cosmological parameter), $\theta^{*}$ denotes the true underlying brightness or parameter value, $\varepsilon_t$ represents photon and background noise, and $b_t$ captures a slowly varying photometric zeropoint drift. Crucially, the inference procedure assumes stationarity and does not model $b_t$, reflecting standard practice when residual calibration drift is believed to be below detection thresholds.

For clarity, we illustrate the mapping using a simplified synthetic instantiation that mirrors the observational structure of astronomical calibration drift.
Figure~\ref{fig:e2_step1_observation} shows the resulting observation sequence under a highly conservative linear zeropoint drift, chosen to be well within documented tolerances of modern surveys. At the level of raw observations, the data appear consistent with stationary noise, and no visually discernible anomaly signals the presence of systematic degradation.

\begin{figure}
  \centering
  \includegraphics[width=0.58\linewidth]{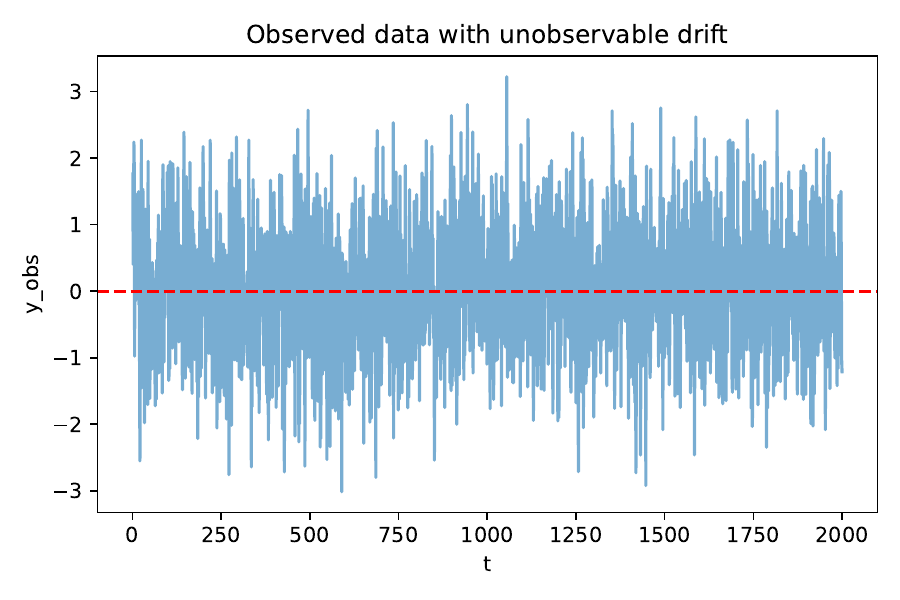}
  \caption{Observed photometric measurements under unobservable calibration drift. Individual observations appear consistent with stationary noise, and no visual anomaly reveals the presence of systematic zeropoint degradation.}
  \label{fig:e2_step1_observation}
\end{figure}

Despite this apparent normality, sequential Bayesian inference exhibits stable yet biased behavior. As shown in Figure~\ref{fig:e2_step2_posterior_mean}, the posterior mean converges smoothly and monotonically, with contracting uncertainty, while systematically departing from the true parameter value. No numerical instability or diagnostic irregularity accompanies this divergence.

\begin{figure}
  \centering
  \includegraphics[width=0.58\linewidth]{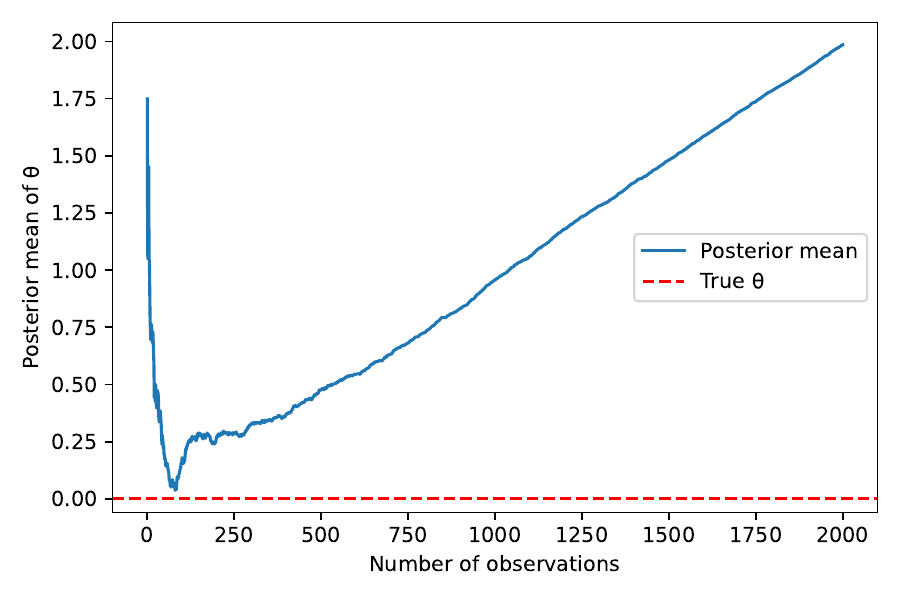}
  \caption{Posterior mean under stationarity-assuming inference. Despite smooth convergence and increasing numerical stability, the inferred parameter systematically departs from the true value due to unmodeled calibration drift.}
  \label{fig:e2_step2_posterior_mean}
\end{figure}

The epistemic failure becomes explicit when inference error is examined as a function of data volume. Figure~\ref{fig:e2_step4_error_vs_data} demonstrates that, after an initial transient, absolute inference error increases monotonically as more observations are incorporated. In this regime, additional data do not correct bias but instead amplify it, directly contradicting the foundational assumption that data accumulation guarantees improved inference.

\begin{figure}
  \centering
  \includegraphics[width=0.58\linewidth]{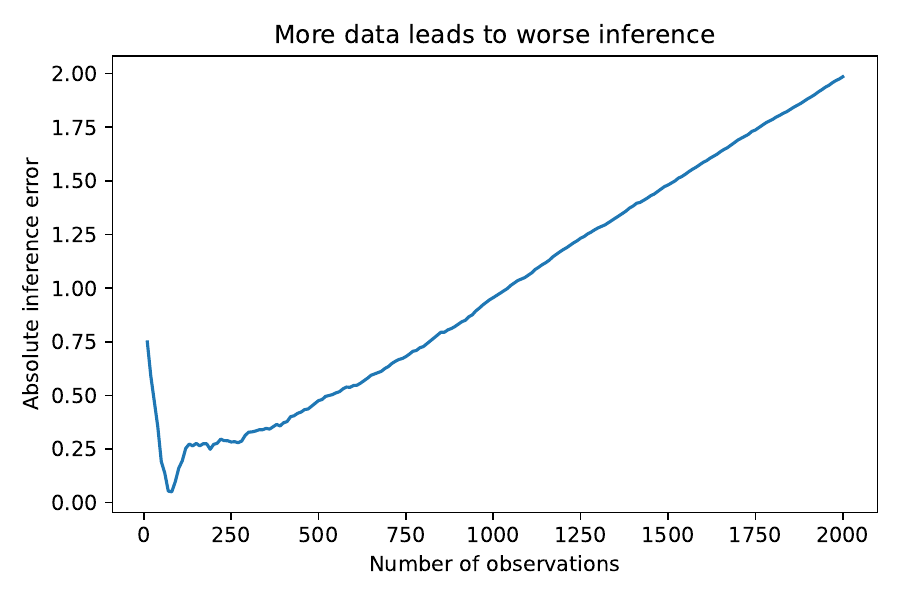}
  \caption{Absolute inference error as a function of data volume. After an initial decrease, continued data accumulation monotonically amplifies inference error, demonstrating that more data can worsen scientific conclusions under unobservable drift.}
  \label{fig:e2_step4_error_vs_data}
\end{figure}

Finally, Figure~\ref{fig:e2_supp_diag_confidence_vs_prediction} illustrates a principled diagnostic decoupling between inference confidence and predictive consistency. Posterior variance continues to contract with increasing data, indicating growing confidence, while predictive error remains elevated and highly dispersed. This decoupling provides a structural red flag: confidence metrics no longer track epistemic validity.

\begin{figure}
  \centering
  \includegraphics[width=0.58\linewidth]{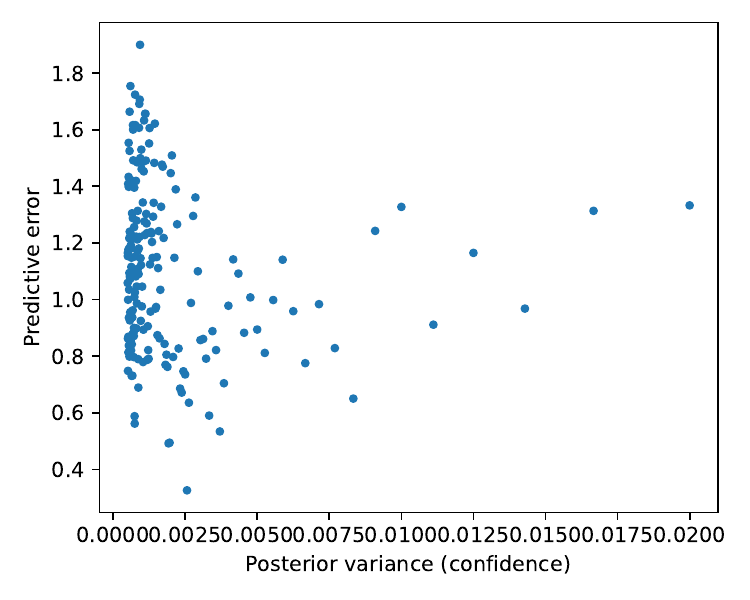}
  \caption{Decoupling of inference confidence and predictive consistency. Posterior variance contracts with increasing data, while predictive error remains elevated and dispersed, revealing a breakdown of confidence as an indicator of epistemic validity.}
  \label{fig:e2_supp_diag_confidence_vs_prediction}
\end{figure}

We emphasize that these results do not imply that any specific astronomical or cosmological inference is incorrect. Rather, they identify a structural regime—realistic for long-duration surveys—in which stability, convergence, and internal diagnostics are insufficient to guarantee correctness. Together with the empirical findings above, this establishes calibration drift not as an isolated nuisance, but as a concrete instance of a broader epistemic regime in which inference reliability cannot be internally guaranteed.
In the presence of unobservable calibration drift, inference can remain numerically well-behaved while systematically diverging from the truth.

\subsection*{The structural reality of contemporary science}

These findings resonate with the realities of contemporary data-intensive science. In domains such as astronomy, climate science, genomics, and long-term biological monitoring, observational pipelines evolve over decades, instruments undergo gradual calibration drift, and analysis methods develop alongside the data they process. Latent biases can accumulate across petabyte-scale datasets, often without independent ground truth for retrospective validation. \emph{Under such conditions, the continued application of inference—driven by the availability of ever more data—risks entrenching systematic errors with increasing confidence.} These features are not exceptional edge cases, but increasingly define how many of the most influential contemporary scientific programs operate. As a result, the very accumulation of evidence can become a source of epistemic error.

\subsection*{An unavoidable choice: external governance or epistemic hazard}

Our findings present an unavoidable choice for data-driven science. \emph{Either} we maintain the current paradigm, which implicitly assumes unconditional inferability from data alone, accepting that in regimes of unobservable reliability drift, stability and confidence become misleading signals that systematically drive conclusions away from truth. \emph{Or} we evolve a governance framework external to inference algorithms—a system for assessing the ``right to infer'' based on process integrity, not just statistical diagnostics. There is no intermediate position in which inference remains epistemically safe while observational validity is left unconstrained.

Such governance must include: (1) explicit tracking of observational process metadata across time, (2) principled methods for detecting when internal diagnostics decouple from predictive validity \citep{vehtari2017loo}, and (3) protocols for suspending inference when process integrity cannot be verified. \emph{Without such external governance, confidence itself becomes epistemically hazardous.}

\subsection*{When should inference stop?}

Our findings establish a \emph{negative epistemic principle}: there exist regimes in which continued inference is epistemically unjustified. When observational reliability degrades in an unobservable manner, \emph{the responsible scientific course is not to adapt the model, but to suspend inference} until the integrity of the observational process can be externally validated. In such cases, confidence becomes a misleading signal, and continued data accumulation becomes an active source of error. The legitimacy of inference is therefore not unconditional; it is contingent upon guarantees about the observational process that \emph{cannot be derived from the data alone}.

\subsection*{Implications for scientific practice}

These results motivate concrete changes in scientific practice. We propose: (1) \emph{inference validity statements} that accompany statistical significance claims, documenting assumptions about observational process stability; (2) \emph{epistemic auditing protocols} that regularly assess whether diagnostic signals remain coupled to predictive performance; and (3) \emph{red-flag mechanisms} that trigger when confidence metrics diverge from external consistency checks \citep{dawid1984prequential}.

Developing principled mechanisms to assess and regulate inference validity under epistemic drift is an urgent direction for future research. \emph{Without such an evolution in practice, the pursuit of ever-larger datasets risks producing internally consistent, mathematically elegant, and systematically incorrect scientific conclusions.}

\section*{Methods Summary}

All experiments consider a minimal inference problem with a single scalar parameter $\theta^\ast = 0$. Observations are generated as $y_t = \theta^\ast + \epsilon_t + b_t$, where $\epsilon_t \sim \mathcal{N}(0,1)$ is observational noise and $b_t$ is an unobservable bias term. We consider linear drift ($b_t = \alpha t$), random-walk drift ($b_t = \sum_{k=1}^t \eta_k$), and a no-drift control. Inference is performed sequentially using a stationary Gaussian likelihood and conjugate Gaussian updates that do not model drift. Diagnostic quantities include posterior mean, absolute inference error, and residual statistics. Additional details are provided in Supplementary Material.

\section*{Supplementary Material}
\subsection*{Supplementary Methods}

\subsubsection*{Generative model}
All experiments consider a minimal inference problem with a single scalar parameter $\theta^\ast$. Observations are generated according to
\[
y_t = \theta^\ast + \epsilon_t + b_t,
\]
where $\epsilon_t \sim \mathcal{N}(0,\sigma^2)$ denotes observational noise and $b_t$ is an unobservable bias term. Unless otherwise stated, $\theta^\ast = 0$ and $\sigma^2 = 1$. The bias term $b_t$ is not modeled by the inference procedure.

\subsubsection*{Drift models}
We consider three types of observational bias:
\begin{itemize}
\item \textbf{Linear drift:} $b_t = \alpha t$ with $\alpha = 0.002$.
\item \textbf{Random-walk drift:} $b_t = \sum_{k=1}^{t} \eta_k$, where $\eta_k \sim \mathcal{N}(0, \sigma_{\mathrm{rw}}^2)$ and $\sigma_{\mathrm{rw}} = 0.01$.
\item \textbf{No-drift control:} $b_t = 0$ for all $t$.
\end{itemize}
All drift processes are locally indistinguishable from noise at the level of individual observations.

\subsubsection*{Inference procedure}
Inference is performed sequentially using a Gaussian likelihood and a Gaussian prior,
\[
p(\theta) = \mathcal{N}(0, 10^2), \quad p(y_t \mid \theta) = \mathcal{N}(\theta, \sigma^2).
\]
The posterior is updated analytically using standard conjugate Gaussian updates. The inference procedure assumes stationarity and does not account for drift.

\subsubsection*{Diagnostic quantities}
To evaluate inference behavior, we track:
\begin{itemize}
\item The posterior mean of $\theta$ as a function of the number of observations.
\item The absolute inference error $|\hat{\theta}_n - \theta^\ast|$.
\item Residual statistics, including residual mean and variance.
\end{itemize}
Residuals are computed as $r_t = y_t - \hat{\theta}_n$ using the final posterior mean.

\subsubsection*{Forgetting baseline}
As an engineering baseline, we consider sliding-window estimation. At time $t$, $\theta$ is estimated using the most recent $W$ observations,
\[
\hat{\theta}_t = \frac{1}{W} \sum_{k=t-W+1}^{t} y_k,
\]
with window size $W = 200$. No other adaptation or drift modeling is applied.

\subsubsection*{Scope and limitations}
The experiments are intentionally minimal and synthetic. Their purpose is not to model any specific scientific domain, but to isolate structural properties of inference under unobservable reliability drift. The phenomenon demonstrated here represents a "worst-case" epistemic scenario; in practice, real-world drift may be partially observable or have zero long-term mean, mitigating (but not eliminating) the risk.

\subsection*{Proposition 1: Formal statement and proof sketch}
\begin{proposition}[Stable convergence under unobservable drift]
Consider a data-generating process
\[
y_t = \theta^\ast + \epsilon_t + b_t,
\]
where $\epsilon_t$ are independent, zero-mean noise terms with finite variance, and $b_t$ is a deterministic or stochastic drift process satisfying:
(i) $b_t$ varies slowly relative to the observation noise, and  
(ii) $b_t$ is not identifiable from any finite window of observations.

Let $\hat{\theta}_n$ denote an estimator obtained from a stationary inference procedure that assumes $b_t \equiv 0$ and is consistent in the absence of drift. Then, under mild regularity conditions, $\hat{\theta}_n$ converges almost surely to
\[
\theta^\ast + \lim_{n \to \infty} \frac{1}{n} \sum_{t=1}^n b_t,
\]
whenever the limit exists. In particular, if the time-averaged drift converges to a non-zero constant, the estimator converges stably but to a biased value.
\end{proposition}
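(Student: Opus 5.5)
The statement is loose ("any estimator", "mild regularity"), so the first step is to fix the class of estimators the argument covers. I would take $\hat\theta_n$ to be the sample mean $\bar y_n=\frac1n\sum_{t\le n}y_t$. I would also cover any estimator asymptotically equivalent to it, in the sense $\hat\theta_n-\bar y_n\to 0$ almost surely. This includes the conjugate Gaussian posterior mean of the Supplementary Methods, since
\[
\hat\theta_n=\frac{n\sigma^{-2}}{n\sigma^{-2}+10^{-2}}\,\bar y_n ,
\]
whose prefactor tends to $1$; the difference $\hat\theta_n-\bar y_n$ is then $O(|\bar y_n|/n)$. I would make this equivalence the content of the "mild regularity conditions." That is the honest reading: an arbitrary estimator that is consistent without drift need not track $\frac1n\sum_t b_t$. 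A median-type estimator, for instance, would not.

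With the estimator fixed, I would decompose
\[
\bar y_n=\theta^\ast+\frac1n\sum_{t=1}^n\epsilon_t+\frac1n\sum_{t=1}^n b_t
\]
and treat the two averages separately. For the noise term, the $\epsilon_t$ are independent, zero-mean and of finite variance. If I also assume $\sup_t\operatorname{Var}(\epsilon_t)<\infty$ (automatic in the i.i.d. case), Kolmogorov's strong law via $\sum_t \operatorname{Var}(\epsilon_t)/t^2<\infty$ gives $\frac1n\sum_t\epsilon_t\to0$ almost surely. For the drift term, its convergence to $\bar b:=\lim_n \frac1n\sum_t b_t$ is the hypothesis "whenever the limit exists." In the stochastic case I would read that limit as existing almost surely, possibly as a random variable, and I need no independence between $b$ and $\epsilon$. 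Combining the two limits with the asymptotic equivalence gives $\hat\theta_n\to\theta^\ast+\bar b$ almost surely. The "in particular" clause follows immediately: if $\bar b\neq0$, the limit differs from $\theta^\ast$.

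To justify the word "stably," I would note that the posterior variance $(n\sigma^{-2}+10^{-2})^{-1}$ is deterministic and tends to $0$ regardless of $b_t$. So the reported uncertainty contracts while the mean converges to the biased limit. Conditions (i) and (ii) on $b_t$ are not used in the convergence itself. I would state explicitly that they explain why internal diagnostics cannot detect $\bar b$, not why the limit holds; the proof needs only the existence of the Cesàro limit.

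I expect the main obstacle to be pinning down the regularity conditions rigorously rather than the limit computation itself. That means restricting to estimators asymptotically equivalent to the sample mean, or more generally to M-estimators whose estimating equation is linear in $y_t$. I would present the sample-mean and posterior-mean case in full and remark that for general stationary procedures the limit is instead the solution of the time-averaged estimating equation, which coincides with $\theta^\ast+\bar b$ only under this linearity.

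A caveat on the examples in the Methods: linear drift $b_t=\alpha t$ has Cesàro mean diverging to $+\infty$, so there the conclusion becomes $\hat\theta_n-\theta^\ast\sim \alpha n/2\to\infty$, which matches the "more data amplifies error" figures. Random-walk drift likewise has no almost-sure Cesàro limit, since $\frac1n\sum_{t\le n} b_t$ grows on the order of $\sqrt n$. Both cases fall outside the "limit exists" hypothesis, and I would remark on this rather than claim the Proposition covers them.
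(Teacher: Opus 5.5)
Your argument is correct and follows the paper's own proof sketch. Like the sketch, you reduce $\hat\theta_n$ to the sample mean, split $\bar y_n$ into $\theta^\ast$, the noise average (which the strong law sends to zero) and the drift average (which converges by hypothesis), and combine. Your refinements are sound and worth keeping: you require $\hat\theta_n-\bar y_n\to 0$ almost surely rather than the paper's $o_p(1)$, which would only give convergence in probability; you make the Kolmogorov variance condition explicit; and you correctly observe that the paper's linear and random-walk drifts have no finite Ces\`aro limit, so they fall outside the Proposition.
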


\textbf{Proof sketch.}  
By the law of large numbers, the sample average of $\epsilon_t$ converges almost surely to zero. The estimator $\hat{\theta}_n$ (e.g., the sample mean or posterior mean under a stationary model) typically takes the form $\frac{1}{n}\sum_{t=1}^n y_t + o_p(1)$. Substituting the data-generating process gives $\hat{\theta}_n = \theta^\ast + \frac{1}{n}\sum_{t=1}^n \epsilon_t + \frac{1}{n}\sum_{t=1}^n b_t$. The first noise term vanishes asymptotically, while the second term converges to the time-averaged drift. Since the inference procedure cannot distinguish $b_t$ from $\theta^\ast$ (by assumption ii), the estimator absorbs the drift into the inferred parameter, yielding stable but biased convergence.

\bibliographystyle{plainnat}
\bibliography{refs}

\end{document}